\documentclass[11pt]{article}

\usepackage[preprint]{acl}

\usepackage{times}
\usepackage{latexsym}
\usepackage{amssymb}
\usepackage{amsthm}
\usepackage{amsmath}
\usepackage{booktabs}
\usepackage{siunitx}

\usepackage[T1]{fontenc}

\usepackage[utf8]{inputenc}

\usepackage{microtype}

\usepackage{inconsolata}

\usepackage{graphicx}
\usepackage{xcolor}

\newtheorem{proposition}{Proposition}
\newtheorem{definition}{Definition}[section]

\title{Initialisation Determines the Basin: Efficient Codebook Optimisation for Extreme LLM Quantization\thanks{Preprint. Under review.}}

\author{Ian W. Kennedy \\
  Department of Computer Science \\
  University of Sheffield \\
  Sheffield, UK \\
  \texttt{iwkennedy@sheffield.ac.uk} \\\And
  Nafise Sadat Moosavi \\
  Department of Computer Science \\
  University of Sheffield \\
  Sheffield, UK \\
  \texttt{n.s.moosavi@sheffield.ac.uk} \\}

\begin{document}
\maketitle

\begin{abstract}
Additive quantization enables extreme LLM compression with $O(1)$ lookup-table dequantization, making it attractive for edge deployment. Yet at 2-bit precision, it often fails catastrophically, even with extensive search and fine-tuning. We show that the dominant bottleneck is codebook initialisation. Greedy sequential initialisation frequently places the model in poor optimisation regions that subsequent beam search and PV-tuning struggle to overcome. We analyse this behaviour through the representational ratio $\rho = N/K^M$, which characterises the relationship between weight groups and codebook capacity, and propose OA-EM, an output-aware EM initialisation method using Hessian-weighted Mahalanobis distance. Across compression rates, search budgets, and three architectures (Llama 3.2 3B, Llama 3.1 8B, Qwen 2.5 3B), OA-EM consistently produces better solutions after PV-tuning and dominates the quality–compute frontier. The severity of the bottleneck scales with $\rho$: moderate at 3 bpp but extreme at 2 bpp, where poor initialisation can degrade perplexity by orders of magnitude. More broadly, our results highlight the importance of optimisation geometry in compressed model spaces, where initialisation can dominate subsequent search and fine-tuning. Our code is available at \footnote{\url{https://github.com/kenno94-IK/aqlm-oaem}}.
\end{abstract}

\section{Introduction}
Large language model deployment on consumer hardware requires aggressive weight compression \citep{gholami2022survey}. While 4-bit quantization is near-lossless \citep{frantar2022gptq,lin2024awq}, the 2-bit regime remains challenging: each parameter is encoded with only 4 possible values, leaving almost no room for approximation error. This regime is particularly relevant for the 3B--8B parameter range, where 2-bit compression enables deployment on single consumer GPUs and mobile edge devices: hardware where memory, not compute, is the binding constraint.

Two paradigms compete in this regime. \emph{Structured} methods---such as lattice codebooks \citep{tseng2024quipsharp}, trellis codes \citep{tseng2024qtip}, and grouped lattice VQ \citep{zhang2025glvq}---achieve strong perplexity via mathematically constrained codebook geometry, but require active computation (e.g.\ Babai rounding, matrix--vector multiplication) during inference. In contrast, \emph{free-form} additive methods \citep{egiazarian2024} learn unconstrained codebooks, enabling $O(1)$ lookup-table (LUT) dequantization with zero multiply-accumulate (MAC) operations per weight group---a pure memory read that is critical for edge deployment on ARM CPUs, microcontrollers, and mobile SoCs where ALU cycles, not memory bandwidth, are the primary bottleneck \citep{egiazarian2024}. Our work focuses on this free-form family.

Additive quantization \citep{egiazarian2024} encodes each group of $g$ weights as a sum of $M$ codewords, drawn from learned codebooks of $K{=}256$ entries each. When performance degrades at extreme compression, the common response is to increase computational effort: wider beam search, more epochs per layer, or larger calibration sets. 
We show that, in this regime, such strategies target the wrong bottleneck. The dominant factor is initialisation: search performed within a poorly initialised region of the solution space yields limited improvement.
The key insight is a \emph{regime transition} governed by the ratio of weight groups to representational capacity:

\begin{definition}[Representational Ratio]
\label{def:rho}
For $M$ additive codebooks with $K$ entries each and $N$ weight groups per layer, the representational ratio is $\rho = N / K^M$.
\end{definition}

\noindent When $\rho < 1$ (overcomplete), there are more representable points than weight groups and initialisation errors can often be absorbed. When $\rho > 1$ (undercomplete), weight groups compete for limited codebook capacity, and initial placement becomes critical. For Llama~3.2~3B:

\begin{table}[h]
\centering
\small
\begin{tabular}{lcccl}
\toprule
Rate & $M$ & $K^M$ & $\rho$ & Regime \\
\midrule
3\,bpp & 3 & $16.8\text{M}$ & ${\approx}\,0.07$ & Overcomplete \\
2\,bpp & 2 & $65\text{K}$ & ${\approx}\,18$ & Undercomplete \\
\bottomrule
\end{tabular}
\end{table}

\noindent This $256\times$ capacity reduction is not a gradual degradation---it is a qualitative change in behaviour. At 3\,bpp, greedy initialisation costs 0.65 perplexity points. At 2\,bpp, it leads to severe degradation on Llama~3.2~3B (WikiText-2 perplexity 352.39 at beam\,4, 60.61 at beam\,8 vs.\ 7.28 FP16), and even quadrupling the beam width to 16 reduces this only to 46.01.

To address this, we propose \textbf{OA-EM} (Output-Aware Expectation-Maximisation), which refines each codebook's initialisation via iterative EM \citep{dempster1977em} using Hessian-weighted Mahalanobis distance derived from calibration activations, directly minimising output reconstruction error rather than weight-space distance.

Beyond the algorithm itself, our primary contribution is the empirical and theoretical observation that \emph{initialisation strongly influences the optimisation trajectory of compressed models}. Whenever learned codebooks are initialised through greedy sequential fitting, the representational ratio $\rho$ predicts when initialisation becomes critical. We demonstrate this through three lines of evidence:

\emph{Basin persistence.} OA-EM's advantage persists after end-to-end PV-tuning \citep{malinovskii2024} across beam widths, epoch budgets, compression rates, model scales, and architectures. On Llama~3.2~3B, PV-tuning compresses a 43-point perplexity gap to 0.23, yet OA-EM remains better in every configuration.

\emph{Asymmetric search scaling.} Increasing beam width from 8 to 16 improves OA-EM (11.53 $\rightarrow$ 11.49 post-PV) but worsens the greedy baseline (11.76 $\rightarrow$ 12.01 post-PV), suggesting that additional search is beneficial primarily when initialisation is already good.

\emph{Pareto dominance.} OA-EM dominates the quality–compute frontier, achieving lower perplexity and better downstream accuracy at matched compute budgets.

\paragraph{Contributions.}
\begin{enumerate}
    \item We introduce the representational ratio $\rho$ and show that it predicts when quantization becomes sensitive to initialisation (\S\ref{sec:theory}).

    \item We propose OA-EM---output-aware EM with Hessian-weighted Mahalanobis distance---as a drop-in replacement for greedy initialisation (\S\ref{sec:oaem}).
      \item Through systematic post-PV analysis across beam widths, epoch budgets, compression rates (2\,bpp and 3\,bpp), and three models (Llama~3.2~3B, Llama~3.1~8B, Qwen~2.5~3B), we show that initialisation can lead to persistent optimisation differences whose severity scales with $\rho$ (\S\ref{sec:postpv}).

\end{enumerate}

\section{Related Work}

\paragraph{Scalar PTQ.}
A large body of work studies scalar post-training quantization (PTQ) for LLM compression, where each weight is assigned an independent low-bit representation. GPTQ \citep{frantar2022gptq} quantizes layer-wise using approximate second-order information; AWQ \citep{lin2024awq} protects salient weights via activation-based scaling. SmoothQuant \citep{xiao2023smoothquant} migrates quantization difficulty from activations to weights via per-channel scaling, and OmniQuant \citep{shao2024omniquant} learns weight clipping and equivalence transformations end-to-end. 
Such scalar methods degrade rapidly below 3\,bpp because scalar codes cannot efficiently use the information budget at extreme compression. LLM.int8() \citep{dettmers2022llmint8} demonstrated mixed-precision decomposition for 8-bit quantization. SpQR \citep{dettmers2024spqr} and SqueezeLLM \citep{kim2024squeezellm} extend sensitivity-aware principles to 3--4 bits. QLoRA \citep{dettmers2023qlora} combines 4-bit quantization with low-rank adaptation for efficient fine-tuning, demonstrating the practical demand for aggressive compression on consumer hardware. These scalar approaches share our motivation that weight sensitivity should guide compression, but operate in the regime where the initialisation problem we study does not arise.

\paragraph{Structured vector quantization for LLMs.}
Beyond scalar quantization, several works explore vector quantization with structured codebooks for extreme compression. QuIP \citep{chee2023quip} enabled 2-bit LLMs via incoherence processing. QuIP\# \citep{tseng2024quipsharp} introduced the randomised Hadamard transform and E8 lattice codebooks, because the codebook structure is fixed (not learned), QuIP\# avoids the initialisation problem we study. SpinQuant \citep{liu2025spinquant} learns rotation matrices to remove outliers and improve quantization accuracy; like QuIP\#, it transforms the weight distribution rather than learning codebooks, so the initialisation bottleneck does not apply. QTIP \citep{tseng2024qtip} achieves strong 2-bit results via trellis coded quantization.

\paragraph{Grouped lattice VQ.}
A recent line of work further develops structured codebooks through lattice-based constructions.
GLVQ \citep{zhang2025glvq} assigns each weight group a customised lattice codebook defined by a learnable generation matrix, achieving state-of-the-art 2-bit perplexity by combining structured codebook geometry with per-group adaptability. GLVQ sidesteps the free-form initialisation trap entirely: Babai rounding provides a closed-form nearest-lattice-point solution, eliminating the combinatorial assignment problem. However, this advantage comes with an inference trade-off: lattice dequantization requires $O(d)$ floating-point MAC operations per weight group at runtime (matrix--vector multiplication via the generation matrix), whereas free-form additive codes (AQLM) use pre-computed lookup tables (LUTs) requiring exactly zero MACs, a pure $O(1)$ memory read per group. As demonstrated by \citet{egiazarian2024}, this distinction translates to significant real-world speedups on CPU and edge hardware where the lack of dedicated tensor cores makes runtime MAC operations prohibitively expensive. On ARM CPUs, microcontrollers, and low-power inference accelerators where ALU cycles are the primary bottleneck, LUT-based dequantization remains substantially faster. Our work is orthogonal to GLVQ: while GLVQ bypasses free-form codebooks to avoid optimisation traps, we show that free-form codebooks are highly capable at 2\,bpp when the initialisation basin is corrected via OA-EM, preserving the LUT inference pathway.

\paragraph{Additive quantization for LLMs.}
Additive quantization represents weights as sums of codewords from multiple codebooks, allowing a much larger set of representable values than scalar quantization at the same bitrate. AQLM \citep{egiazarian2024} adapts multi-codebook quantization from information retrieval \citep{babenko2014additive,jegou2011pq} to LLM compression. PV-tuning \citep{malinovskii2024} extends AQLM with end-to-end fine-tuning of both codebooks and indices using straight-through estimation. Neither work examines how initialisation quality interacts with compression rate or persists through fine-tuning, which is the focus of our paper. This question is particularly important at extreme compression, where poor initialisation can cause additive quantization to fail despite sufficient representational capacity.

\paragraph{EM-based VQ for LLMs.}
Expectation-maximisation (EM) has previously been applied to improve vector quantization. GPTVQ \citep{vanbaalen2024} applies EM-based VQ within the GPTQ framework for LLM compression, using a single learned codebook and performing EM directly for quantization. LSQ++ \citep{martinez2018} similarly applies EM to additive quantization in information retrieval. Our setting differs in two important respects. First, OA-EM operates within additive quantization for LLMs, where multiple codebooks interact combinatorially. Second, OA-EM is output-aware, optimising reconstruction error in activation space and serving specifically as an initialisation stage for downstream beam search rather than as the primary quantizer.

\paragraph{Positioning.}
Existing work improves different stages of the vector quantization pipeline. Structured methods (QuIP\#, QTIP, GLVQ) improve \emph{quantizer design} through codebook geometry; GPTVQ improves the \emph{codebook learning algorithm}; and PV-tuning improves \emph{post-quantization fine-tuning}. 
Our work is orthogonal: we improve the \emph{initialisation stage} and show that codebook initialisation determines the persistent optimisation basin reached by subsequent search and fine-tuning. Rather than proposing a new quantizer geometry, we show that initialisation quality is a previously overlooked bottleneck that dominates performance at extreme compression within free-form additive quantization.
More broadly, the representational ratio $\rho$ and the basin-persistence phenomenon we document apply to any learned-codebook VQ method that relies on greedy sequential initialisation, including future methods combining structured and free-form components.

\section{Method}
\label{sec:method}
We first review the additive quantization framework used in AQLM, then analyse why greedy initialisation becomes a bottleneck at extreme compression. Finally, we introduce OA-EM, an output-aware EM algorithm that improves codebook initialisation and yields better optimisation basins for downstream search and fine-tuning.

\subsection{Background: AQLM}

Additive quantization for LLMs is implemented in AQLM, which represents each weight group $\mathbf{w} \in \mathbb{R}^g$ as a sum of $M$ codewords, $\hat{\mathbf{w}} = \sum_{m=1}^{M} \mathbf{c}_{m, b_m}$,
where $\mathbf{c}_{m,b_m}$ is the $b_m$-th entry of codebook $\mathcal{C}_m \in \mathbb{R}^{K \times g}$, with $K{=}256$. At 2\,bpp with group size $g{=}8$, AQLM uses $M{=}2$ codebooks: each group of 8 weights is represented by two 8-bit indices.

The layer-wise objective minimises output reconstruction error on calibration data, recently shown to be linearly predictive of model perplexity increase \citep{malinovskii-etal-2025-higgs}:
\begin{equation}
    \mathcal{L} = \| \mathbf{X}\mathbf{W} - \mathbf{X}\hat{\mathbf{W}} \|_F^2
    \label{eq:loss}
\end{equation}
where $\mathbf{X}$ is the calibration activation matrix. Optimisation proceeds in two stages. Codebooks are first initialised via residual k-means, after which assignments are refined by beam search over the combinatorial space of codeword combinations.

\subsection{Beam Search: The Standard Remedy}
\label{sec:beam}

Greedy sequential assignment---selecting the best entry from each codebook in turn without revisiting earlier choices---suffers from \emph{premature commitment}: the best first-codebook entry in isolation may pair poorly with available second-codebook entries, but greedy search discards all alternatives, making such errors irrecoverable. AQLM's standard remedy is beam search with width $b$, which defers commitment by maintaining $b$ active candidates at each codebook stage, bridging the gap between greedy search ($b{=}1$) and exhaustive enumeration ($b{=}K^{M-1}$) at a cost of $O(MbK)$ per weight group.

However, wider beams are expensive: on Llama~3.2~3B at 2\,bpp, increasing $b$ from 4 to 16 raises quantization time from 6.1h to 16.9h, a $2.8\times$ cost for a reduction from 352.39 to only 46.01 on WikiText-2 perplexity (Table~\ref{tab:2bpp}). 
More importantly, beam search optimises \emph{assignments} over a fixed set of codebook centroids. If the centroids themselves are poorly placed, even the globally optimal assignment will yield high reconstruction error---beam search finds the best path through a bad tree, but cannot reshape the tree itself. This reveals the deeper limitation we address: the bottleneck is not insufficient search over assignments, but poor codebook geometry. 
OA-EM (\S\ref{sec:oaem}) addresses the root cause directly, improving centroid placement so that even narrow-beam search operates in a favourable landscape.

\subsection{The Initialisation Bottleneck}
\label{sec:theory}

Residual k-means fits codebooks greedily: $\mathcal{C}_1$ is fitted to the weight vectors, then $\mathcal{C}_2$ is fitted to the residuals $\mathbf{w} - \mathbf{c}_{1,b_1}$. This sequential procedure ignores the joint structure; the optimal $\mathcal{C}_1$ depends on what $\mathcal{C}_2$ can represent, and vice versa. We now analyse how this coupling leads to suboptimal assignments and why the effect becomes severe when representational capacity is limited ($\rho > 1$).

\begin{proposition}[Greedy Suboptimality Bound]
\label{prop:greedy}
Let $(i^*, j^*)$ denote the optimal assignment for weight group $\mathbf{w}$, and $(i^g, j^g)$ the greedy sequential assignment. Let $\boldsymbol{\delta} = \mathbf{c}_{1,i^g} - \mathbf{c}_{1,i^*}$ be the first-codebook displacement. The suboptimality gap is:
\begin{align}
    \varepsilon^g - \varepsilon^* &= \underbrace{\|\boldsymbol{\delta}\|^2}_{\text{direct cost}} + \underbrace{2\langle \boldsymbol{\delta},\, \mathbf{c}_{2,j^g} - \mathbf{r}^*\rangle}_{\text{coupling}} \nonumber \\
    &\quad+ \underbrace{\|\mathbf{r}^* - \mathbf{c}_{2,j^g}\|^2 - \varepsilon^*}_{\text{residual mismatch}\;\geq\, 0}
    \label{eq:bound}
\end{align}
where $\mathbf{r}^* = \mathbf{w} - \mathbf{c}_{1,i^*}$ is the joint-optimal residual.
\end{proposition}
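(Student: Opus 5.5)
The identity is pure algebra once both errors are written in terms of the joint-optimal residual $\mathbf{r}^*$. I will first fix notation: $\varepsilon^* = \|\mathbf{w} - \mathbf{c}_{1,i^*} - \mathbf{c}_{2,j^*}\|^2 = \|\mathbf{r}^* - \mathbf{c}_{2,j^*}\|^2$ and $\varepsilon^g = \|\mathbf{w} - \mathbf{c}_{1,i^g} - \mathbf{c}_{2,j^g}\|^2$. Both are measured in the same (Euclidean) norm as in the statement. A Hessian-weighted norm would go through verbatim with the corresponding inner product.

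\textbf{Step 1: re-express the greedy residual.} By the definition of $\boldsymbol{\delta}$,
\[
\mathbf{w} - \mathbf{c}_{1,i^g} = \mathbf{w} - \mathbf{c}_{1,i^*} - \boldsymbol{\delta} = \mathbf{r}^* - \boldsymbol{\delta}.
\]
Hence $\varepsilon^g = \|(\mathbf{r}^* - \mathbf{c}_{2,j^g}) - \boldsymbol{\delta}\|^2$.

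\textbf{Step 2: expand and subtract.} Expanding the square gives
\[
\varepsilon^g = \|\boldsymbol{\delta}\|^2 - 2\langle \boldsymbol{\delta}, \mathbf{r}^* - \mathbf{c}_{2,j^g}\rangle + \|\mathbf{r}^* - \mathbf{c}_{2,j^g}\|^2 .
\]
The middle term equals $2\langle \boldsymbol{\delta}, \mathbf{c}_{2,j^g} - \mathbf{r}^*\rangle$, which is the coupling term. Subtracting $\varepsilon^*$ from both sides yields exactly the three-term decomposition \eqref{eq:bound}. No property of the greedy rule is used in this step, so the equality holds for any pair $(i^g, j^g)$.

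\textbf{Step 3: sign of the residual-mismatch term.} This is the only step needing an argument beyond expansion, and it is where I would be careful. Since $(i^*, j^*)$ is jointly optimal, $j^*$ in particular minimises $\|\mathbf{w} - \mathbf{c}_{1,i^*} - \mathbf{c}_{2,j}\|^2 = \|\mathbf{r}^* - \mathbf{c}_{2,j}\|^2$ over all $j$ with $i^*$ held fixed. Otherwise replacing $j^*$ would strictly lower the joint error, a contradiction. Therefore
\[
\varepsilon^* = \min_j \|\mathbf{r}^* - \mathbf{c}_{2,j}\|^2 \le \|\mathbf{r}^* - \mathbf{c}_{2,j^g}\|^2,
\]
so the residual-mismatch term is nonnegative.

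I will close with a remark on the other two terms. The direct cost $\|\boldsymbol{\delta}\|^2$ is also nonnegative, while the coupling term has no fixed sign. The overall gap $\varepsilon^g - \varepsilon^* \ge 0$ holds simply by optimality of $(i^*, j^*)$. Thus a large displacement $\boldsymbol{\delta}$ can be compensated only through a favourably aligned coupling term, which the greedy rule does not control.
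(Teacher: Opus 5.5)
Your proposal is correct and follows the paper's proof essentially verbatim. Like the paper, you rewrite the greedy residual as $\mathbf{r}^* - \boldsymbol{\delta}$, expand the square, subtract $\varepsilon^*$, and get nonnegativity of the residual-mismatch term from the fact that $j^*$ minimises $\|\mathbf{r}^* - \mathbf{c}_{2,j}\|^2$ with $i^*$ fixed; your contradiction argument for that last step is a slightly more explicit version of the paper's one-line remark.
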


\noindent\emph{Proof.} See Appendix~\ref{app:proof}.\medskip

The decomposition reveals three sources of greedy error. The \emph{direct cost} $\|\boldsymbol{\delta}\|^2$ is the squared distance between greedy and optimal first-codebook entries. The \emph{coupling term} captures how well $\mathcal{C}_2$ can compensate for $\boldsymbol{\delta}$---when $\mathcal{C}_2$ has an entry near $\mathbf{r}^* - \boldsymbol{\delta}$, this term can be negative. The \emph{residual mismatch} is always non-negative: $\mathcal{C}_2$ was fitted to the greedy residual, not the optimal one.

\paragraph{Error correction capacity.}
After committing to codebook~1, the remaining codebooks provide $K^{M-1}$ possible residual configurations. At 3\,bpp ($M{=}3$), $K^2 = 65{,}536$ configurations allow the remaining codebooks to absorb most $\boldsymbol{\delta}$ displacements. At 2\,bpp ($M{=}2$), only $K = 256$ entries are available, a $256\times$ reduction in correction capacity. 
Consequently, the second codebook must simultaneously represent the true weight structure \emph{and} compensate for first-codebook errors, making greedy placement much more brittle.
This combinatorial reduction corresponds directly to the representational ratio $\rho$ introduced earlier. When $\rho < 1$, the number of representable code combinations exceeds the number of weight groups, and many greedy errors can be absorbed. When $\rho \gg 1$, weight groups compete for limited capacity, making poor initial placement difficult to fix. Greedy initialisation therefore degrades gracefully at 3\,bpp but can fail catastrophically at 2\,bpp.
Structured methods such as GLVQ \citep{zhang2025glvq} sidestep this problem entirely by replacing free-form codebooks with lattice geometry, eliminating the combinatorial assignment; OA-EM instead solves the problem within the free-form paradigm by improving the initial codebook geometry while preserving the LUT dequantization pathway.

\subsection{OA-EM: Output-Aware EM Initialisation}
\label{sec:oaem}

OA-EM improves upon k-means initialisation by replacing Euclidean distance with Hessian-weighted Mahalanobis distance in both centroid optimisation and code assignment. OA-EM operates within AQLM's residual framework---codebooks are still fitted sequentially---but refines each codebook using output-aware EM that directly targets the reconstruction objective (Eq.~\ref{eq:loss}).

Starting from k-means-initialised centroids $\{\mathbf{c}_k\}_{k=1}^K$ and assignments $\{b_i\}_{i=1}^N$, OA-EM alternates two steps for $R$ rounds:

\smallskip
\noindent\textbf{M-step (centroid optimisation).}\; Fix assignments and optimise centroids to minimise the Hessian-weighted reconstruction error:
\begin{equation}
    \mathcal{L}_{\text{EM}} = \frac{1}{N} \sum_{i=1}^{N} \mathbf{e}_i^\top \mathbf{H}_i\, \mathbf{e}_i, \quad \mathbf{e}_i = \mathbf{w}_i - \mathbf{c}_{b_i}
    \label{eq:mstep}
\end{equation}
where $\mathbf{H}_i = \mathbf{X}_i^\top \mathbf{X}_i + \lambda \mathbf{I}$ is the damped block-diagonal Hessian approximation for weight group~$i$, following the use of second-order information for quantization in GPTQ \citep{frantar2022gptq} and GPTVQ \citep{vanbaalen2024}; the damping constant is $\lambda = 0.01 \cdot \overline{\mathrm{diag}(\mathbf{H})}$.

Centroids are updated via $S$ Adam steps with cosine learning rate annealing from $\eta$ to $0.1\eta$.

\noindent\textbf{E-step (hard reassignment).}\; Fix centroids and reassign each weight group to its nearest centroid under Mahalanobis distance:
\begin{equation}
    b_i \leftarrow \arg\min_{k \in [K]} \; (\mathbf{w}_i - \mathbf{c}_k)^\top \mathbf{H}_i\, (\mathbf{w}_i - \mathbf{c}_k)
    \label{eq:estep}
\end{equation}

\paragraph{Connection to the greedy bound.}
OA-EM does not eliminate sequential fitting, but it directly reduces the dominant error terms in Proposition~\ref{prop:greedy}. Euclidean k-means allocates first-codebook centroids largely according to weight magnitude: large-norm groups attract centroids regardless of their output sensitivity. When $\rho \gg 1$ and representational capacity is scarce, this wastes centroids on output-insensitive groups, producing a large displacement $\boldsymbol{\delta}$ for the output-sensitive groups that dominate the reconstruction loss. OA-EM's Hessian weighting reverses this allocation: the M-step concentrates centroids on groups with large $\mathbf{H}_i$, ensuring small $\boldsymbol{\delta}$ precisely where it matters most---directly reducing the $\|\boldsymbol{\delta}\|^2$ direct cost for groups that contribute most to Eq.~\ref{eq:loss}. Improved first-codebook placement also reduces the residual mismatch term, because the residuals $\mathbf{r} = \mathbf{w} - \mathbf{c}_{1,b_1}$ passed to the second codebook more closely resemble the jointly optimal residuals $\mathbf{r}^*$. We use $R{=}3$ rounds of $S{=}100$ Adam steps with $\eta {=} 10^{-4}$.

\section{Experimental Setup}

\paragraph{Models.} We evaluate on Llama~3.2~3B and Llama~3.1~8B \citep{grattafiori2024llama3}, and Qwen~2.5~3B \citep{qwen2024qwen25}, covering multiple architectures and model sizes. All models are quantized with AQLM at 2\,bpp ($M{=}2$, $g{=}8$). Llama~3.2~3B is additionally evaluated at 3\,bpp ($M{=}3$, $g{=}8$) to test the $\rho$ prediction across compression regimes.

\paragraph{Calibration.} We use 128 sequences from C4 \citep{raffel2020c4} with length 4096 for calibration, following common practice in post-training quantization \citep{frantar2022gptq,lin2024awq}.

\paragraph{Beam search configurations.} We vary beam width $b \in \{4, 8, 16\}$ and maximum epochs $e \in \{5, 100\}$ with early stopping at 0.01 relative MSE, spanning a $2.8\times$ range in quantization time (6.1h to ${\sim}$17h on Llama~3.2~3B). Qwen~2.5~3B and 3\,bpp are evaluated at $b{=}8$, $e{=}100$.

\paragraph{OA-EM configuration.} OA-EM is run for 3 EM rounds with 100 Adam steps per round, learning rate $\eta = 10^{-4}$ with cosine annealing to $0.1\eta$. Both E-step and M-step use the damped block-diagonal Hessian approximation $\mathbf{H}_i = \mathbf{X}_i^\top \mathbf{X}_i + \lambda \mathbf{I}$.

\paragraph{PV-tuning.} We follow the PV-tuning procedure of \citet{malinovskii2024}. Adam optimiser, $\eta_{\text{ft}} = 3 \times 10^{-4}$, batch 32, 10K samples, and 5 epochs; the best WikiText-2 checkpoint is selected. The PV-tuning configuration is identical for all quantization settings, isolating the effect of initialisation.

\paragraph{Evaluation.} We report perplexity on WikiText-2 and C4 (4096 context). Zero-shot evaluation is performed on ARC-Easy and ARC-Challenge \citep{clark2018arc}, HellaSwag \citep{zellers-etal-2019-hellaswag}, PIQA \citep{bisk2020piqa}, WinoGrande \citep{sakaguchi2021winogrande}, LAMBADA \citep{paperno-etal-2016-lambada}, using the LM Evaluation Harness \citep{gao2023lmeval}.

\paragraph{Hardware.} All experiments were run on a single 
NVIDIA A100 80\,GB GPU, except PV-tuning of Llama~3.1~8B, 
which used a single B200 192GB. All results are reported from a single run with a fixed random seed (42).

\section{Initialisation and Basin Persistence}
We study whether codebook initialisation determines the optimisation basin reached by additive quantization. We therefore analyse models before and after PV-tuning. Pre-PV results isolate the effect of initialisation on quantization optimisation, while post-PV results test whether these differences persist after fine-tuning. Persistence would indicate distinct optimisation basins. We conduct this analysis on Llama~3.2~3B as a representative model across a wide range of beam-search configurations.

\begin{table}[t]
\centering
\small

\begin{tabular}{l S S c}
\toprule
 & \textbf{Wiki-2} & \textbf{C4} & \textbf{Time} \\
\midrule
\textbf{FP16} & 7.28 & 11.04 & --- \\
\midrule
\multicolumn{4}{l}{\textbf{Greedy initialisation}} \\
$b{=}4,\ e{=}100$  & 352.39 & 20.66 & 6.1h \\
$b{=}8,\ e{=}100$  & 60.61  & 18.64 & 9.9h \\
$b{=}16,\ e{=}100$ & 46.01  & 19.00 & 16.9h \\
$b{=}8,\ e{=}5$    & 85.72  & 47.04 & 6.8h \\
\midrule
\multicolumn{4}{l}{\textbf{OA-EM initialisation}} \\
$b{=}4,\ e{=}100$  & 16.82 & 18.09 & 6.1h \\
$b{=}8,\ e{=}100$  & \bfseries 17.39 & \bfseries 18.00 & 9.2h \\
$b{=}16,\ e{=}100$ & 16.53 & 17.92 & 15.5h \\
$b{=}8,\ e{=}5$    & 18.91 & 17.98 & 7.3h \\
\bottomrule
\end{tabular}
\caption{Pre-PV-tuning results at 2\,bpp on Llama~3.2~3B. Beam width $b$, maximum epochs $e$. Quantization time on one A100.}
\label{tab:2bpp}
\end{table}

\subsection{Initialisation Effects Before PV-Tuning}
\label{sec:results}

\paragraph{Overcomplete Regime (3\,bpp, $\rho \approx 0.07$).}

At 3\,bpp, the initialisation bottleneck is relatively mild. OA-EM reduces WikiText-2 perplexity from 9.52 to 8.87 ($-$0.65), while C4 increases slightly from 13.39 to 13.51.
LAMBADA accuracy improves from 0.673 to 0.687, while LAMBADA perplexity drops from 4.87 to 4.60. OA-EM reduces quantization time by 5.7\% (12h\,39m vs.\ 13h\,25m), as improved initialisation requires fewer beam-search epochs per layer. Full pre-PV-tuning results are in Appendix~\ref{app:3bpp}.

\paragraph{Undercomplete Regime (2\,bpp, $\rho \approx 18$).}

The 2\,bpp regime reveals the full impact of the initialisation bottleneck. Table~\ref{tab:2bpp} presents pre-PV-tuning results across beam-search configurations. 
Three observations emerge. First, { beam search alone cannot compensate for poor initialisation}. For greedy initialisation, performance improves with wider beams ($b{=}4$: 352.39, $b{=}8$: 60.61, $b{=}16$: 46.01), but remains far from the OA-EM results. In contrast, OA-EM remains stable across beam widths (16.82--17.39).
Second, increasing the search budget does not consistently improve the baseline solution quality. While WikiText-2 perplexity improves with larger beams, C4 perplexity slightly worsens (18.64\,$\to$\,19.00), suggesting overfitting to the calibration objective. Third, OA-EM improves both quality and efficiency. For example, with $b{=}8$, OA-EM achieves 17.39/18.00 in 9.2h compared to 60.61/18.64 in 9.9h for greedy initialisation.

However, pre-PV-tuning results alone do not determine the practical impact of improved initialisation. PV-tuning can substantially improve quantized models, potentially compensating for poor initialisation. We therefore examine whether OA-EM's advantage persists after PV-tuning in \S\ref{sec:postpv}.

\subsection{Basin Persistence After PV-Tuning}
\label{sec:postpv}

PV-tuning \citep{malinovskii2024} performs end-to-end fine-tuning of both codebooks and indices via straight-through estimation. Because PV-tuning substantially improves quantized models, it acts as a strong optimiser that could potentially erase differences caused by initialisation. A key question is therefore whether improved initialisation still leads to better final models, or whether PV-tuning causes all configurations to converge to the same solution. Table~\ref{tab:pvtuning} reports WikiText-2 perplexity before and after PV-tuning across all quantization configurations on Llama~3.2~3B. PV-tuning substantially improves all models, dramatically reducing the gap caused by poor initialisation.
However, OA-EM consistently achieves lower final perplexity across every configuration. Thus, although PV-tuning mitigates poor initialisation, it does not eliminate its effect: models starting from better initial codebooks converge to better final solutions.

\begin{table}[t]
\centering
\small

\begin{tabular}{l@{\hskip 4pt}l@{\hskip 8pt}cc@{\hskip 8pt}cr}
\toprule
Init & Config & Pre-PV & Post-PV & $\Delta$ & Q-time \\
\midrule
\multicolumn{6}{l}{\textit{Narrow beam ($b{=}4$, $e{=}100$)}} \\
Greedy & & 352.39 & 12.66 & $-$339.73 & 6.1h \\
OA-EM & & 16.82 & \textbf{11.53} & $-$5.29 & 6.1h \\
\midrule
\multicolumn{6}{l}{\textit{Standard beam ($b{=}8$, $e{=}100$)}} \\
Greedy & & 60.61 & 11.76 & $-$48.85 & 9.9h \\
OA-EM & & 17.39 & \textbf{11.53} & $-$5.86 & 9.2h \\
\midrule
\multicolumn{6}{l}{\textit{Wide beam ($b{=}16$, $e{=}100$)}} \\
Greedy & & 46.01 & 12.01 & $-$34.00 & 16.9h \\
OA-EM & & 16.53 & \textbf{11.49} & $-$5.04 & 15.5h \\
\midrule
\multicolumn{6}{l}{\textit{Early stopping ($b{=}8$, $e{=}5$)}} \\
Greedy & & 85.72 & 12.69 & $-$73.03 & 6.8h \\
OA-EM & & 18.91 & \textbf{11.76} & $-$7.15 & 7.3h$^\dagger$ \\
\bottomrule
\end{tabular}
\caption{WikiText-2 perplexity before and after PV-tuning at 2\,bpp on Llama~3.2~3B.  
}
\label{tab:pvtuning}
\vspace{0.5em}
{\footnotesize † The only configuration in which OA-EM quantization exceeds the greedy baseline.}
\end{table}

\paragraph{Asymmetric beam-width scaling.}
The response to the beam width differs between the two initialisations. The greedy baseline is non-monotonic: $b{=}4$ (12.66) $\rightarrow$ $b{=}8$ (11.76, best) $\rightarrow$ $b{=}16$ (12.01). In contrast, OA-EM remains stable at narrow beams and improves slightly with wider search: $b{=}4$ (11.53) $\rightarrow$ $b{=}8$ (11.53) $\rightarrow$ $b{=}16$ (11.49). If both methods converged to the same solution after PV-tuning, the beam width would affect them similarly. The contrasting responses suggest that the two initialisations lead to different optimisation trajectories.

\paragraph{Pareto improvements.}
OA-EM also improves the quality–time trade-off. For example, OA-EM at $b{=}4$ (6.1h, 11.53 ppl) outperforms greedy at $b{=}8$ (9.9h, 11.76 ppl), achieving better perplexity with 38\% less quantization time. OA-EM with early stopping ($b{=}8$, $e{=}5$, 7.3h) matches the full greedy baseline ($b{=}8$, $e{=}100$, 9.9h) in 26\% less time. The only configuration where OA-EM is slower is the early-stopping setting, where the fixed cost of OA-EM initialisation is not fully amortised. Overall, the cheapest OA-EM run produces a better final model than the most expensive greedy run. A full Pareto analysis is provided in Appendix~\ref{app:pareto}.

\section{Downstream Task Performance}
\label{sec:downstream}

Table~\ref{tab:downstream_summary} summarises downstream performance across all models and beam configurations. OA-EM matches or improves average accuracy in every setting where the initialisation bottleneck is non-trivial. Full per-task breakdowns, including LAMBADA (the downstream task most sensitive to perplexity improvements), are in Appendix~\ref{app:downstream}.

\begin{table}[t]
\centering
\small

\setlength{\tabcolsep}{4pt}
\begin{tabular}{@{}llcc@{}}
\toprule
& & \multicolumn{2}{c}{Avg Acc$\uparrow$} \\
Model & Config & Grdy & OA-EM \\
\midrule
Llama 3B & $b{=}4$ & .573 & \textbf{.589} \\
Llama 3B & $b{=}8$ & .585 & \textbf{.591} \\
Llama 3B & $b{=}16$ & .585 & \textbf{.591} \\
Llama 3B & $b{=}8,e{=}5$ & .563 & \textbf{.572} \\
\midrule
Llama 8B & $b{=}8$ & .649 & \textbf{.656} \\
\midrule
Qwen 3B & $b{=}8$ & \textbf{.606} & .603 \\
\bottomrule
\end{tabular}
\caption{Post-PV-tuning downstream summary at 2\,bpp. Avg = mean of 6 zero-shot accuracy tasks. Full per-task tables in Appendix~\ref{app:downstream}.}
\label{tab:downstream_summary}
\end{table}

On Llama~3.2~3B, OA-EM wins average accuracy across all four beam configurations, with the clearest gains at $b{=}4$ (+1.7pp) where the greedy baseline has minimal search to compensate for poor initialisation. On Llama~3.1~8B, OA-EM wins 4 of 6 accuracy tasks with a 0.7-point average improvement. While OA-EM establishes Pareto dominance in perplexity across all architectures, the baseline holds a small downstream advantage on Qwen~2.5~3B (0.606 vs.\ 0.603 average accuracy), consistent with the mild initialisation bottleneck on this model (comparable $\rho$, but smoother weight statistics; see \S\ref{sec:cross}). The post-PV perplexity gap of 0.20 points is precisely preserved from the pre-PV gap, providing clear evidence of distinct optimisation basins even when the bottleneck is mild. At the 3B scale, zero-shot evaluations are inherently high-variance \citep{gao2023lmeval}; perplexity remains the more reliable signal \citep{egiazarian2024,frantar2022gptq,tseng2024quipsharp}, and on this metric OA-EM wins on both WikiText-2 and C4 after PV-tuning across every model we test.

\section{Generality Across Compression Rates and Architectures}
\label{sec:cross}

Table~\ref{tab:summary} summarises perplexity results across compression rates and architectures.
\paragraph{The $\rho$ gradient.}
The representational ratio $\rho$ governs the \emph{severity}, rather than the \emph{existence}, of the initialisation bottleneck. At 3\,bpp ($\rho \approx 0.07$), the pre-PV gap of 0.65 compresses only $5.4\times$ to 0.12 through PV-tuning---the gap attenuates but does not vanish, and OA-EM wins 5/6 downstream tasks after PV-tuning, and improves ARC-Easy by 3.5 points (Appendix~\ref{app:3bpp_pv}). At 2\,bpp ($\rho \approx 18$), the dramatic 43-point pre-PV gap compresses $188\times$ to 0.23, yet OA-EM wins every beam configuration and most downstream tasks. Even in the overcomplete regime, where sufficient codebook capacity exists to absorb initialisation errors, OA-EM's output-aware placement finds a solution that is measurably better and is preserved after PV-tuning. These results suggest that PV-tuning improves models within their existing optimisation basin rather than moving them between basins.

\begin{table}[!htb]
\centering
\small

\setlength{\tabcolsep}{3.5pt}
\begin{tabular}{@{}llcccc@{}}
\toprule
& & \multicolumn{2}{c}{Pre-PV} & \multicolumn{2}{c}{Post-PV} \\
Model & Rate & Grdy & OA-EM & Grdy & OA-EM \\
\midrule
Llama 3B & 3\,bpp & 9.52 & \textbf{8.87} & 8.66 & \textbf{8.54} \\
\midrule
Llama 3B & 2\,bpp & 60.61 & \textbf{17.39} & 11.76 & \textbf{11.53} \\
Llama 8B & 2\,bpp & 18.86 & \textbf{16.38} & 9.39 & \textbf{9.25} \\
Qwen 3B & 2\,bpp & 12.50 & \textbf{12.30} & 10.93 & \textbf{10.73} \\
\bottomrule
\end{tabular}
\caption{Basin persistence across compression rates and architectures ($b{=}8$, $e{=}100$). WikiText-2 perplexity before and after PV-tuning.}
\label{tab:summary}
\end{table}

\paragraph{Why gap compression varies across models.}
The 8B model provides an informative counterpoint. Despite having \emph{higher} $\rho$ than 3B (wider layers, same $K^2 = 65{,}536$ capacity), the 8B baseline (18.86) is dramatically better than the 3B baseline (60.61). This reveals that $\rho$ is necessary but not sufficient to predict catastrophic failure; the weight distribution also matters. This is consistent with scaling law analysis showing that larger models exhibit less quantization-induced degradation \citep{ouyang-etal-2025-low}. We attribute the 8B's resilience to smoother per-layer weight statistics: Llama~3.1~8B, trained on substantially more data (15T vs.\ 3T tokens), has fewer high-magnitude outlier groups that dominate first-codebook capacity under greedy placement. When weight statistics are smoother, greedy initialisation is less catastrophic even under high $\rho$, though measurably suboptimal.

\begin{table}[!htb]
\centering
\small

\setlength{\tabcolsep}{3pt}
\begin{tabular}{@{}lcccc@{}}
\toprule
Benchmark & Domain & Bsln & OA-EM & Ratio \\
\midrule
C4 & In-domain & 18.64 & 18.00 & 1.04$\times$ \\
LAMBADA & Near-OOD & 12.28 & 8.85 & 1.39$\times$ \\
WikiText-2 & Far-OOD & 60.61 & 17.39 & 3.49$\times$ \\
\bottomrule
\end{tabular}
\caption{Degradation scales with domain distance at 2\,bpp ($b{=}8$, pre-PV-tuning). Ratio = baseline/OA-EM.}
\label{tab:domain}
\end{table}
\section{Domain-Dependent Degradation}
\label{sec:domain}

The baseline's pre-PV failure is not uniform; it scales with domain distance from the C4 calibration set (Table~\ref{tab:domain}). We observe a gradient from 1.04$\times$ degradation in-domain (C4) to 3.49$\times$ far out-of-domain (WikiText-2). The mechanism follows from the undercomplete regime. The layer-wise objective (Eq.~\ref{eq:loss}) implicitly weights each weight group by its calibration importance. When $\rho \gg 1$, limited capacity is concentrated on calibration-important groups, producing codebooks that partially memorise calibration-specific statistics. Under a different evaluation domain,
the importance profile of weight groups shifts: groups that were unimportant during calibration may become important for the new distribution. Because these groups received little representational capacity during quantization, reconstruction error grows with domain divergence. OA-EM mitigates this effect because its output-aware objective distributes codebook capacity according to Hessian-weighted output sensitivity rather than calibration frequency alone. As a result, the learned codebooks better preserve the linguistic representations that remain important across domains, improving robustness to domain shift.

\section{Conclusion}
This work shows that codebook initialisation determines the optimisation basin for additive quantization, with severity governed by the representational ratio $\rho$. In the undercomplete regime ($\rho \gg 1$), greedy initialisation traps the model in a suboptimal basin that beam search and PV-tuning struggle to escape.
In the overcomplete regime ($\rho < 1$), the effect attenuates but does not disappear: even with sufficient representational capacity, initial placement continues to influence the final solution.
Across compression rates, search budgets, and model architectures, OA-EM consistently produces lower perplexity than greedy initialisation after PV-tuning. A particularly notable observation is that beam width can have \emph{opposite} effects depending on the initialisation, behaviour that is consistent with optimisation proceeding within different basins rather than reliably moving between them.
The practical implication is immediate. OA-EM at beam\,4 (6.1h) produces a better model—on both perplexity and downstream tasks—than the greedy baseline at beam\,16 (16.9h), a $2.8\times$ speedup. More broadly, our results suggest a simple principle for extreme compression: improving initialisation quality may be more effective than increasing search intensity.
Finally, the representational ratio $\rho$ provides a useful lens for understanding when additive quantization becomes brittle. As LLM deployment increasingly targets edge and CPU environments where LUT-based dequantization is essential, improving the optimisation geometry of learned-codebook quantizers may be as important as designing new quantization schemes.

\section*{Limitations}

We evaluate on three models from two architecture families (Llama~3.2~3B, Llama~3.1~8B, Qwen~2.5~3B). While basin persistence holds across all three, the downstream accuracy signal is clear only on Llama~3.2~3B where the initialisation bottleneck is most severe; on Qwen, the baseline slightly leads on average accuracy (0.606 vs.\ 0.603). We focus on the 3B--8B parameter range as the regime most relevant for consumer GPU and edge deployment; larger models would strengthen the generality claim but require multi-GPU infrastructure. Our method applies to free-form additive quantization and does not directly transfer to lattice-based (QuIP\#, GLVQ) or trellis-based (QTIP) methods, which avoid the discrete assignment problem by constraining codebook geometry; however, the $\rho$ framework and basin persistence analysis may inform initialisation strategies for future hybrid approaches that combine structured and learned components. We do not compare absolute perplexity against GLVQ or QTIP, as our contribution is orthogonal: we improve initialisation within the free-form paradigm rather than proposing an alternative codebook geometry. We evaluate English-only models and benchmarks.

\section*{Ethical Considerations}

This work improves the efficiency of existing open-weight LLMs. We do not introduce new training data or models. Compression reduces deployment costs and environmental impact but does not address biases present in the original models.

\section*{Acknowledgements}

This work was supported by the UKRI AI Centre for Doctoral Training in Speech and Language Technologies (SLT) and their Applications funded by UK Research and Innovation [grant number EP/S023062/1]. For the purpose of open access, the author has applied a Creative Commons Attribution (CC BY) licence to any Author Accepted Manuscript version arising.

We acknowledge IT Services at The University of Sheffield for the provision of services for High Performance Computing.


\bibliography{anthology-1,anthology-2,custom}
\appendix

\section{Overcomplete Regime: 3\,bpp Detailed Results}
\label{app:3bpp}

Table~\ref{tab:3bpp} presents the pre-PV-tuning perplexity at 3\,bpp on Llama~3.2~3B.

\begin{table}[ht!]
\centering
\small
\caption{Pre-PV-tuning perplexity at 3\,bpp on Llama~3.2~3B ($b{=}8$, $e{=}100$).}
\label{tab:3bpp}
\setlength{\tabcolsep}{10pt}
\begin{tabular}{lcc}
\toprule
& Wiki-2$\downarrow$ & C4$\downarrow$ \\
\midrule
FP16 & 7.28 & 11.04 \\
\midrule
Baseline & 9.52 & \textbf{13.39} \\
+\,OA-EM & \textbf{8.87} & 13.51 \\
\bottomrule
\end{tabular}
\end{table}

\subsection{3\,bpp Post-PV-Tuning Downstream}
\label{app:3bpp_pv}

After PV tuning, the gap at 3~bpp shrinks from 0.65 to 0.12 (WikiText-2: 8.66 vs.\ 8.54; C4: 11.43 vs.\ 11.45). Table~\ref{tab:3bpp_pvdown} reports the full post-PV downstream evaluation. OA-EM wins 4 of 6 tasks and ties on 1, with the largest gains on ARC-Easy (+3.5pp) and LAMBADA accuracy (+1.6pp). It also achieves the lowest LAMBADA perplexity. Average accuracy increases by 0.7 percentage points (0.647,$\to$,0.654), with the only notable regression on WinoGrande ($-1.2$pp). These results indicate that basin persistence extends to the overcomplete regime.

\section{Full Downstream and Perplexity Results}
\label{app:downstream}

This appendix provides the complete per-task downstream evaluations and per-model perplexity breakdowns that underlie the summary in Table~\ref{tab:downstream_summary} and Table~\ref{tab:summary}. Results are organised by model.

\subsection{Llama~3.2~3B at 2\,bpp}
Tables~\ref{tab:pvdownstream4}--\ref{tab:pvdownstream5} present post-PV-tuning downstream results across all four beam configurations. OA-EM wins or ties average accuracy in every setting; the advantage is largest at $b{=}4$ (+1.7pp), where the greedy baseline has minimal search to compensate for poor initialisation.

\subsection{Llama~3.1~8B at 2\,bpp}

Table~\ref{tab:pv8b} shows the perplexity breakdown before and after PV-tuning. Table~\ref{tab:downstream8b} presents post-PV-tuning downstream results.

\subsection{Qwen~2.5~3B at 2\,bpp}

Table~\ref{tab:pvqwen} presents the perplexity breakdown; OA-EM wins on both metrics after PV-tuning despite the mild bottleneck on this architecture. Table~\ref{tab:downstreamqwen} shows the downstream evaluation where the baseline nominally leads on average accuracy (0.606 vs.\ 0.603), as discussed in \S\ref{sec:downstream}.

\section{Pareto Analysis}
\label{app:pareto}

Table~\ref{tab:pareto} consolidates the quality-compute frontier at 2\,bpp on Llama~3.2~3B. Every OA-EM configuration produces a better final model than any greedy configuration at equal or lower compute.

\section{Proof of Proposition~\ref{prop:greedy}}
\label{app:proof}

\begin{proof}
Let $\mathbf{r}^g = \mathbf{w} - \mathbf{c}_{1,i^g} = \mathbf{r}^* - \boldsymbol{\delta}$. Then:
\begin{align}
    \varepsilon^g &= \|\mathbf{r}^g - \mathbf{c}_{2,j^g}\|^2 = \|\mathbf{r}^* - \boldsymbol{\delta} - \mathbf{c}_{2,j^g}\|^2 \nonumber \\
    &= \|\mathbf{r}^* - \mathbf{c}_{2,j^g}\|^2 + \|\boldsymbol{\delta}\|^2 - 2\langle \mathbf{r}^* - \mathbf{c}_{2,j^g},\, \boldsymbol{\delta}\rangle
\end{align}
Subtracting $\varepsilon^* = \|\mathbf{r}^* - \mathbf{c}_{2,j^*}\|^2$ and rearranging yields Eq.~\eqref{eq:bound}. The residual mismatch $\|\mathbf{r}^* - \mathbf{c}_{2,j^g}\|^2 - \|\mathbf{r}^* - \mathbf{c}_{2,j^*}\|^2 \geq 0$ since $j^*$ minimises over $\mathbf{r}^*$.
\end{proof}

\begin{table*}[ht!]
\centering
\small
\caption{Post-PV-tuning downstream at 3\,bpp on Llama~3.2~3B ($b{=}8$, $e{=}100$). Avg = mean of 6 accuracy tasks.}
\label{tab:3bpp_pvdown}
\begin{tabular}{lccccccc|c}
\toprule
& ARC-C & ARC-E & HeSw & LAM\,acc & LAM\,ppl$\downarrow$ & PIQA & WiGr & Avg$\uparrow$ \\
\midrule
Greedy & .418 & .658 & .704 & .670 & 4.87 & .760 & \textbf{.669} & .647 \\
OA-EM & \textbf{.421} & \textbf{.693} & .704 & \textbf{.685} & \textbf{4.64} & \textbf{.762} & .657 & \textbf{.654} \\
\bottomrule
\end{tabular}
\end{table*}

\begin{table*}[ht!]
\centering
\small
\caption{Post-PV downstream, Llama~3.2~3B, 2\,bpp, $b{=}4$, $e{=}100$. OA-EM wins or ties every metric.}
\label{tab:pvdownstream4}
\begin{tabular}{lccccccc|c}
\toprule
& ARC-C & ARC-E & HeSw & LAM\,acc & LAM\,ppl$\downarrow$ & PIQA & WiGr & Avg$\uparrow$ \\
\midrule
Greedy & .350 & .560 & .620 & .577 & 7.65 & .734 & .594 & .573 \\
OA-EM & \textbf{.359} & \textbf{.614} & \textbf{.625} & \textbf{.584} & \textbf{7.13} & .734 & \textbf{.619} & \textbf{.589} \\
\bottomrule
\end{tabular}
\end{table*}

\begin{table*}[ht!]
\centering
\small
\caption{Post-PV downstream, Llama~3.2~3B, 2\,bpp, $b{=}8$, $e{=}100$.}
\label{tab:pvdownstream}
\begin{tabular}{lccccccc|c}
\toprule
& ARC-C & ARC-E & HeSw & LAM\,acc & LAM\,ppl$\downarrow$ & PIQA & WiGr & Avg$\uparrow$ \\
\midrule
Greedy & .356 & .600 & .623 & .574 & 7.96 & \textbf{.741} & \textbf{.614} & .585 \\
OA-EM & \textbf{.366} & \textbf{.601} & \textbf{.626} & \textbf{.604} & \textbf{6.95} & .736 & .611 & \textbf{.591} \\
\bottomrule
\end{tabular}
\end{table*}

\begin{table*}[ht!]
\centering
\small
\caption{Post-PV downstream, Llama~3.2~3B, 2\,bpp, $b{=}16$, $e{=}100$.}
\label{tab:pvdownstream16}
\begin{tabular}{lccccccc|c}
\toprule
& ARC-C & ARC-E & HeSw & LAM\,acc & LAM\,ppl$\downarrow$ & PIQA & WiGr & Avg$\uparrow$ \\
\midrule
Greedy & \textbf{.364} & .619 & .625 & .579 & \textbf{7.52} & \textbf{.731} & .589 & .585 \\
OA-EM & .357 & \textbf{.624} & \textbf{.627} & \textbf{.587} & 7.66 & .726 & \textbf{.624} & \textbf{.591} \\
\bottomrule
\end{tabular}
\end{table*}

\begin{table*}[ht!]
\centering
\small
\caption{Post-PV downstream, Llama~3.2~3B, 2\,bpp, $b{=}8$, $e{=}5$ (early stopping).}
\label{tab:pvdownstream5}
\begin{tabular}{lccccccc|c}
\toprule
& ARC-C & ARC-E & HeSw & LAM\,acc & LAM\,ppl$\downarrow$ & PIQA & WiGr & Avg$\uparrow$ \\
\midrule
Greedy & \textbf{.332} & \textbf{.569} & .611 & .556 & 8.61 & .716 & .594 & .563 \\
OA-EM & .331 & .561 & \textbf{.625} & \textbf{.574} & \textbf{7.73} & \textbf{.734} & \textbf{.609} & \textbf{.572} \\
\bottomrule
\end{tabular}
\end{table*}

\clearpage

\begin{table*}[ht!]
\centering
\small
\caption{WikiText-2 / C4 perplexity before and after PV-tuning at 2\,bpp on Llama~3.1~8B ($b{=}8$, $e{=}100$).}
\label{tab:pv8b}
\setlength{\tabcolsep}{10pt}
\begin{tabular}{lcccc}
\toprule
& \multicolumn{2}{c}{Before PV} & \multicolumn{2}{c}{After PV} \\
Init & Wiki-2 & C4 & Wiki-2 & C4 \\
\midrule
Greedy & 18.86 & 15.01 & 9.39 & 12.02 \\
OA-EM & \textbf{16.38} & \textbf{14.50} & \textbf{9.25} & \textbf{11.89} \\
\bottomrule
\end{tabular}
\end{table*}

\begin{table*}[ht!]
\centering
\small
\caption{Post-PV downstream, Llama~3.1~8B, 2\,bpp, $b{=}8$, $e{=}100$.}
\label{tab:downstream8b}
\begin{tabular}{lccccccc|c}
\toprule
& ARC-C & ARC-E & HeSw & LAM\,acc & LAM\,ppl$\downarrow$ & PIQA & WiGr & Avg$\uparrow$ \\
\midrule
Greedy & \textbf{.432} & .656 & .707 & \textbf{.679} & 4.60 & .759 & .661 & .649 \\
OA-EM & .424 & \textbf{.677} & \textbf{.714} & .675 & \textbf{4.59} & \textbf{.769} & \textbf{.679} & \textbf{.656} \\
\bottomrule
\end{tabular}
\end{table*}

\begin{table*}[t!] 
\centering
\small
\caption{WikiText-2 / C4 perplexity before and after PV-tuning at 2\,bpp on Qwen~2.5~3B ($b{=}8$, $e{=}100$).}
\label{tab:pvqwen}
\setlength{\tabcolsep}{10pt}
\begin{tabular}{lcccc}
\toprule
& \multicolumn{2}{c}{Before PV} & \multicolumn{2}{c}{After PV} \\
Init & Wiki-2 & C4 & Wiki-2 & C4 \\
\midrule
Greedy & 12.50 & \textbf{16.01} & 10.93 & 14.57 \\
OA-EM & \textbf{12.30} & 16.08 & \textbf{10.73} & \textbf{14.49} \\
\bottomrule
\end{tabular}
\end{table*}

\begin{table*}[t!] 
\centering
\small
\caption{Post-PV downstream, Qwen~2.5~3B, 2\,bpp, $b{=}8$, $e{=}100$.}
\label{tab:downstreamqwen}
\begin{tabular}{lccccccc|c}
\toprule
& ARC-C & ARC-E & HeSw & LAM\,acc & LAM\,ppl$\downarrow$ & PIQA & WiGr & Avg$\uparrow$ \\
\midrule
Greedy & \textbf{.375} & \textbf{.662} & .626 & \textbf{.600} & \textbf{7.29} & \textbf{.739} & .634 & \textbf{.606} \\
OA-EM & .366 & .651 & \textbf{.630} & .587 & 7.37 & .737 & \textbf{.647} & .603 \\
\bottomrule
\end{tabular}
\end{table*}

\begin{table*}[ht!]
\centering
\small
\caption{Quality-compute Pareto analysis at 2\,bpp on Llama~3.2~3B. All metrics post-PV-tuning. Avg = mean of 6 accuracy tasks.}
\label{tab:pareto}
\begin{tabular}{llccr}
\toprule
Init & Config & Wiki-2$\downarrow$ & Avg$\uparrow$ & Q-time \\
\midrule
Greedy & $b{=}4$, $e{=}100$ & 12.66 & .573 & 6.1h \\
OA-EM & $b{=}4$, $e{=}100$ & 11.53 & .589 & 6.1h \\
\midrule
Greedy & $b{=}8$, $e{=}100$ & 11.76 & .585 & 9.9h \\
OA-EM & $b{=}8$, $e{=}100$ & 11.53 & .591 & 9.2h \\
\midrule
Greedy & $b{=}16$, $e{=}100$ & 12.01 & .585 & 16.9h \\
OA-EM & $b{=}16$, $e{=}100$ & \textbf{11.49} & \textbf{.591} & 15.5h \\
\midrule
Greedy & $b{=}8$, $e{=}5$ & 12.69 & .563 & 6.8h \\
OA-EM & $b{=}8$, $e{=}5$ & 11.76 & .572 & 7.3h \\
\bottomrule
\end{tabular}
\end{table*}
\end{document}